\newtheorem{theorem}{Theorem}[section]
\newtheorem{proposition}[theorem]{Proposition}
\theoremstyle{definition}
\theoremstyle{remark}
\newtheorem{remark}[theorem]{Remark}
\newcommand{\E}{\mathbb{E}}
\newcommand{\KL}{\mathrm{KL}}
\newcommand{\Var}{\mathrm{Var}}
\newcommand{\tauanc}{\tau_{\mathrm{anc}}}
\newcommand{\apo}{APO}
\title{\textbf{\apo: \texorpdfstring{$\alpha$}{Alpha}-Divergence Preference Optimization}\\
\large Bridging Mode-Covering and Mode-Seeking RLHF via Scheduled Divergence}
\author{Wang Zixian\\
\small China Mobile Communications Group Shandong Co., Ltd. Tai'an Branch\\
\small \texttt{wangzixian@sd.chinamobile.com}}
\date{}
\begin{document}
\maketitle

\begin{abstract}
Two divergence regimes dominate modern alignment practice.
Supervised fine-tuning and many distillation-style objectives implicitly minimize the \emph{forward} KL $\KL(q\|\pi_\theta)$, yielding stable \emph{mode-covering} updates but often under-exploiting high-reward modes.
In contrast, PPO-style online RLHF behaves closer to \emph{reverse} KL $\KL(\pi_\theta\|q)$, enabling \emph{mode-seeking} improvements but risking mode collapse.
Recent anchored methods (e.g., ADPO) show that doing the projection in \emph{anchored coordinates} can substantially improve stability, yet they typically commit to a single divergence.
We introduce \textbf{\apo} (\textbf{$\alpha$-Divergence Preference Optimization}), an anchored framework that uses Csisz\'ar $\alpha$-divergence to continuously interpolate between forward- and reverse-KL behavior within the same anchored geometry.
We derive unified gradient dynamics parameterized by $\alpha$, analyze gradient variance properties, and propose a practical \textbf{reward\,+\,confidence guarded $\alpha$ schedule} that transitions from coverage to exploitation \emph{only when the policy is both improving and confidently calibrated}.
Experiments on \textbf{Qwen3-1.7B} with \textsc{math-level3} show that \apo\ achieves competitive performance with GRPO and GSPO baselines while maintaining training stability.
\end{abstract}

\section{Introduction}
\label{sec:introduction}

Large language model (LLM) alignment is often practiced through two seemingly different optimization paradigms:
\begin{itemize}[leftmargin=1.2em,itemsep=2pt]
    \item \textbf{Forward-KL (mode covering)} objectives, e.g., supervised fine-tuning (SFT) and distillation, which minimize $\KL(q\|\pi_\theta)$ for a target distribution $q$. These methods are stable and ``zero-avoiding'' but can produce overly averaged behavior.
    \item \textbf{Reverse-KL-like (mode seeking)} objectives, e.g., PPO~\cite{Schulman2017PPO} and GRPO~\cite{Shao2024GRPO}-style online RLHF, which emphasize high-reward modes and can achieve higher ceilings, but are sensitive to variance, overconfidence, and mode collapse.
\end{itemize}

The recently proposed ADPO~\cite{Wang2025ADPO} perspective suggests that a major source of instability is not only the divergence choice, but also \emph{where} we perform the projection.
By anchoring the policy in log-ratio coordinates relative to a reference policy (e.g., $\pi_{\mathrm{ref}}=\pi_{\mathrm{old}}$ in on-policy RL), one obtains better-conditioned geometry and an implicit trust region via temperature scaling.

This paper pushes the idea further: on top of anchored coordinates, we replace the single forward KL with a \emph{family} of divergences.
Concretely, we propose \textbf{\apo: $\alpha$-Divergence Preference Optimization}.
The same training pipeline can start in a forward-KL-like regime (coverage, safety, stability) and smoothly transition to a reverse-KL-like regime (exploitation, higher peak reward) by scheduling $\alpha$.

\paragraph{Scope.}
We focus on \textbf{online LLM RLHF} with group sampling (multiple completions per prompt), because this is where reverse-KL-like updates (PPO/GRPO) are practically valuable and where collapse is most painful.
We use a \textbf{Boltzmann soft target} over the sampled candidate set, which closely matches modern GRPO-style pipelines.

\paragraph{Contributions.}
\begin{enumerate}[leftmargin=1.2em,itemsep=2pt]
    \item We formulate a general anchored $f$-divergence objective over sampled candidate sets, instantiated with Csisz\'ar $\alpha$-divergence.
    \item We derive unified gradient dynamics governed by $r^\alpha$ weighting (where $r=q/\tilde{\pi}_\theta$) and analyze how $\alpha$ controls the bias-variance trade-off in policy gradients.
    \item We propose a practical \textbf{reward\,+\,confidence guarded} $\alpha$ scheduler that only becomes more mode-seeking when the policy is both \emph{confident} (low entropy) and \emph{improving} (positive reward gain), addressing the ``confident-but-wrong'' failure mode.
    \item We validate \apo\ on \textbf{Qwen3-1.7B} with \textsc{math-level3}, demonstrating competitive performance compared to GRPO and GSPO baselines.
\end{enumerate}

\section{Related Work}
\label{sec:related}

\paragraph{Preference Optimization and RLHF.}
Reinforcement Learning from Human Feedback (RLHF) typically involves learning a reward model from preferences and then optimizing a policy via PPO~\cite{Schulman2017PPO,Christiano2017RLHF,Ouyang2022InstructGPT}.
Direct Preference Optimization (DPO)~\cite{Rafailov2023DPO} simplifies this by deriving a closed-form solution to the KL-constrained reward maximization problem, optimizing policy-reference log ratios directly.
Recent variants extend this paradigm: IPO~\cite{Azar2024IPO} adds a regularization term to prevent overfitting, SimPO~\cite{Meng2024SimPO} simplifies the reference-free objective, and KTO~\cite{Ethayarajh2024KTO} uses Kahneman-Tversky value functions.
\apo\ differs by introducing a \emph{continuous} divergence family rather than committing to a single objective.

\paragraph{Group-Relative Policy Optimization.}
GRPO~\cite{Shao2024GRPO} extends PPO to preference learning by normalizing advantages within groups of sampled completions, enabling efficient online RLHF without a separate reward model.
GSPO~\cite{Zheng2025GSPO} improves GRPO-style training stability by operating with sequence-level objectives/ratios.
GTPO~\cite{Simoni2025GTPO} analyzes GRPO instability (e.g., gradient conflicts and collapse) and introduces gradient/entropy control for stabilization.
G$^2$RPO-A~\cite{Guo2025G2RPOA} studies guided GRPO configurations and proposes an adaptive guidance mechanism that adjusts guidance during training.
These methods share a common theme with \apo: \emph{dynamically adjusting the aggressiveness of RL updates based on training signals}.
\apo\ differs by controlling this through the divergence family ($\alpha$) rather than clipping thresholds or guidance weights.

\paragraph{$f$-Divergences in Machine Learning.}
The $f$-divergence family~\cite{Csiszar1967,AliSilvey1966} provides a unified framework for measuring distributional discrepancy.
\emph{Csisz\'ar--Amari} $\alpha$-divergence~\cite{Csiszar1967,Amari2016} is particularly attractive because it continuously connects forward and reverse KL (distinct from R\'enyi divergence~\cite{Renyi1961}, which shares the name but has different properties).
Prior work has explored $f$-divergences in variational inference~\cite{Li2016Renyi,Dieng2017VR}, GANs~\cite{Nowozin2016fGAN}, and imitation learning~\cite{Ghasemipour2020fIL}.
In RL, $\alpha$PPO~\cite{Xu2023AlphaPPO} systematically studied $\alpha$-divergence as a trust-region constraint for PPO, finding that intermediate $\alpha$ values often outperform pure KL.
\apo\ builds on this insight but applies it to the \emph{objective function} (not the constraint) and introduces a confidence-guarded schedule for LLM RLHF.

\paragraph{Trust-Region Methods.}
Ensuring stable policy updates is a central challenge in RL.
TRPO~\cite{Schulman2015TRPO} enforces stability via explicit KL constraints, while PPO~\cite{Schulman2017PPO} approximates this with ratio clipping.
ADPO~\cite{Wang2025ADPO} shows that anchored coordinates provide an \emph{implicit} trust region via temperature-scaled curvature.
\apo\ extends this by allowing the divergence itself to be scheduled, providing an additional degree of freedom for balancing exploration and exploitation.

\section{Preliminaries: Anchored Coordinates for Group RLHF}
\label{sec:preliminaries}

We work in the standard group-based RLHF setting.
For each prompt (context) $x$, we sample a candidate set $S_x=\{y_1,\dots,y_P\}$ of $P$ completions from the current policy.
Let $\ell_i=\log \pi_\theta(y_i|x)$ be the student log-probabilities and $\ell_i^{\mathrm{ref}}=\log \pi_{\mathrm{ref}}(y_i|x)$ be the anchor log-probabilities.
In online RLHF, we use \textbf{on-policy anchoring}:
\begin{equation}
\pi_{\mathrm{ref}} \;:=\; \pi_{\mathrm{old}},
\end{equation}
where $\pi_{\mathrm{old}}$ is the sampling policy from the previous iteration.

\subsection{Anchored Logits and Anchored Policy}
\label{sec:anchored_logits}

Define anchored logits
\begin{equation}
u_i \;=\; \frac{\ell_i-\ell_i^{\mathrm{ref}}}{\tauanc},\qquad i\in S_x,
\end{equation}
and the induced distribution over the candidate set
\begin{equation}
\tilde{\pi}_\theta(i \mid S_x)
\;=\;
\mathrm{softmax}(u)_i
\;=\;
\frac{\exp(u_i)}{\sum_{j\in S_x}\exp(u_j)}.
\end{equation}
The temperature $\tauanc>0$ controls curvature scaling in anchored coordinates and plays the same stabilizing role as an implicit trust region: smaller $\tauanc$ penalizes deviations from the anchor more strongly (see \cref{sec:curvature_analysis}).

\subsection{Boltzmann Soft Target}
\label{sec:targets}

We define a target distribution $q(\cdot\mid S_x)$ on the candidate set using a Boltzmann (softmax) transformation of group-relative advantages.
Compute group-relative advantages $A_i$ via z-score normalization:
\begin{equation}
A_i = \frac{R_i - \bar{R}}{\sigma_R + \epsilon},\qquad \bar{R}=\frac{1}{P}\sum_{j=1}^P R_j,\quad \sigma_R = \sqrt{\frac{1}{P}\sum_{j=1}^P (R_j-\bar{R})^2},
\end{equation}
where $R_i$ is the reward for completion $y_i$ and $\epsilon>0$ is a small constant for numerical stability.
The Boltzmann target is:
\begin{equation}
\label{eq:boltzmann_target}
q(i\mid S_x) = \frac{\exp(A_i/\beta_r)}{\sum_{j\in S_x}\exp(A_j/\beta_r)},
\end{equation}
where $\beta_r>0$ controls target sharpness.
Smaller $\beta_r$ makes $q$ concentrate more on the best responses.

\section{\texorpdfstring{$\alpha$}{Alpha}-Divergence Preference Optimization}
\label{sec:apo}

Let $p_\theta(\cdot)=\tilde{\pi}_\theta(\cdot\mid S_x)$ be the anchored student distribution over the candidate set (we write $p_\theta$ for brevity) and let $q(\cdot)=q(\cdot\mid S_x)$ be the Boltzmann target.
We define the \apo\ objective using the Csisz\'ar $\alpha$-divergence:

\begin{equation}
\label{eq:apo_objective}
\mathcal{L}_\alpha(\theta)
\;=\;
\E_{x,S_x}\Big[D_\alpha\big(q(\cdot)\,\|\,p_\theta(\cdot)\big)\Big],
\end{equation}
where
\begin{equation}
\label{eq:alpha_div}
D_\alpha(q\|p)
\,=\,
\frac{1}{\alpha(1-\alpha)}
\left(
1 - \sum_{i\in S_x} q(i)^\alpha\,p(i)^{1-\alpha}
\right),
\qquad \alpha\in(0,1).
\end{equation}

\subsection{KL Limits}
\label{sec:kl_limits}

The $\alpha$-divergence continuously interpolates between forward and reverse KL:
\begin{align}
\label{eq:alpha_limits}
\lim_{\alpha\to 1} D_\alpha(q\|p) &= \KL(q\|p) \quad\text{(forward KL, mode-covering)},\\
\lim_{\alpha\to 0} D_\alpha(q\|p) &= \KL(p\|q) \quad\text{(reverse KL, mode-seeking)}.
\end{align}
We restrict $\alpha\in(0,1)$ throughout this work, which already interpolates between forward and reverse KL and yields a monotone ``mode-covering $\to$ mode-seeking'' path.
Extending to $\alpha\notin(0,1)$ is mathematically valid but leads to less interpretable optimization behavior in our RLHF setting, so we leave it to future work.
This provides a principled way to transition between SFT-like stability ($\alpha\approx 1$) and PPO-like exploitation ($\alpha\approx 0$).

\subsection{Unified Gradient Dynamics}
\label{sec:grad_dynamics}

Define the ratio
\begin{equation}
\label{eq:ratio}
r(i) \;=\; \frac{q(i)}{p_\theta(i)}.
\end{equation}

\begin{theorem}[Unified gradient for $\alpha$-divergence]
\label{thm:alpha_grad}
Assume $q(i)>0$ whenever $p_\theta(i)>0$ on the candidate set.
Then the gradient of \cref{eq:alpha_div} w.r.t.\ $\theta$ can be written as
\begin{equation}
\label{eq:alpha_grad}
\nabla_\theta D_\alpha(q\|p_\theta)
\;=\;
-\frac{1}{\alpha}\;
\E_{i\sim p_\theta}\!\Big[r(i)^\alpha\;\nabla_\theta \log p_\theta(i)\Big].
\end{equation}
\end{theorem}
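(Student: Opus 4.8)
The plan is a direct differentiation of the closed form in \cref{eq:alpha_div}, exploiting that the target $q$ is a fixed function of the rewards and so carries no $\theta$-dependence. First I would pull the prefactor $\tfrac{1}{\alpha(1-\alpha)}$ outside and note that $\nabla_\theta 1 = 0$, so the entire gradient comes from the cross term $\sum_i q(i)^\alpha p_\theta(i)^{1-\alpha}$. Since only $p_\theta(i)$ depends on $\theta$, each summand contributes $q(i)^\alpha\,\nabla_\theta\big[p_\theta(i)^{1-\alpha}\big]$, and the problem reduces to differentiating a single power of $p_\theta$.

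The key step is this power rule. By the chain rule, $\nabla_\theta\big[p_\theta(i)^{1-\alpha}\big] = (1-\alpha)\,p_\theta(i)^{-\alpha}\,\nabla_\theta p_\theta(i)$, after which I would apply the log-derivative (score-function) identity $\nabla_\theta p_\theta(i) = p_\theta(i)\,\nabla_\theta \log p_\theta(i)$, which holds pointwise for the softmax regardless of its normalization. Combining these restores the exponent to $p_\theta(i)^{1-\alpha}$ and produces an overall factor $(1-\alpha)$ that cancels the $(1-\alpha)$ in the denominator of the prefactor, leaving $-\tfrac{1}{\alpha}\sum_i q(i)^\alpha p_\theta(i)^{1-\alpha}\,\nabla_\theta\log p_\theta(i)$.

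Finally I would recognize the weight as reweighted $p_\theta$-mass: writing $q(i)^\alpha p_\theta(i)^{1-\alpha} = p_\theta(i)\,\big(q(i)/p_\theta(i)\big)^\alpha = p_\theta(i)\,r(i)^\alpha$ with $r$ as in \cref{eq:ratio}, the sum collapses into an expectation over $i\sim p_\theta$, which is exactly \cref{eq:alpha_grad}. There is no substantive obstacle here: the computation is routine, and the only genuine care is in the hypotheses. The softmax construction in \cref{sec:anchored_logits} guarantees $p_\theta(i)>0$ on the candidate set, so the ratio $r(i)$ and the log-derivative identity are always well-defined; the stated assumption $q(i)>0$ whenever $p_\theta(i)>0$ is what ensures $r(i)^\alpha$ is finite and the divergence differentiable, with indices where $q(i)=0$ contributing nothing to either side and thus droppable.

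I would close by emphasizing the reading this form invites: the result is a REINFORCE-style estimator in which the score $\nabla_\theta\log p_\theta$ is reweighted by $r(i)^\alpha$, so $\alpha$ directly controls how aggressively the update concentrates on candidates where $q$ exceeds $p_\theta$. This is precisely the $r^\alpha$-weighting the paper uses to recover the mode-covering limit as $\alpha\to 1$ and the mode-seeking limit as $\alpha\to 0$, and it anticipates the bias-variance discussion in the sequel.
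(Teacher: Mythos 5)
Your proposal is correct and follows essentially the same route as the paper's proof: differentiate the cross term $\sum_i q(i)^\alpha p_\theta(i)^{1-\alpha}$ via the power rule and the score-function identity, cancel the $(1-\alpha)$ against the prefactor, and rewrite $q^\alpha p^{1-\alpha}=p\,r^\alpha$ to obtain the expectation under $p_\theta$. The additional remarks on positivity of the softmax and the REINFORCE-style reading are sound but do not change the argument.
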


\begin{proof}
Let $L_\alpha = \sum_i q(i)^\alpha p(i)^{1-\alpha}$.
Taking the derivative:
\begin{align}
\nabla_\theta L_\alpha
&= \sum_i q(i)^\alpha (1-\alpha) p(i)^{-\alpha} \nabla_\theta p(i) \\
&= (1-\alpha) \sum_i q(i)^\alpha p(i)^{1-\alpha} \nabla_\theta \log p(i) \\
&= (1-\alpha) \E_{i\sim p}\big[r(i)^\alpha \nabla_\theta \log p(i)\big],
\end{align}
where we used $q(i)^\alpha p(i)^{1-\alpha} = p(i) \cdot (q(i)/p(i))^\alpha = p(i) r(i)^\alpha$.
Since $D_\alpha = \frac{1}{\alpha(1-\alpha)}(1-L_\alpha)$, we have
\begin{equation}
\nabla_\theta D_\alpha = -\frac{1}{\alpha(1-\alpha)} \nabla_\theta L_\alpha = -\frac{1}{\alpha} \E_{i\sim p}\big[r(i)^\alpha \nabla_\theta \log p(i)\big].
\end{equation}
\end{proof}

\begin{remark}[Limiting gradient forms]
\label{rem:limiting_gradients}
As $\alpha\to 1$, using $r^\alpha = r$ we recover the forward-KL gradient:
$\nabla_\theta \KL(q\|p_\theta) = -\E_{i\sim q}[\nabla_\theta \log p_\theta(i)]$.
As $\alpha\to 0$, using the expansion $r^\alpha = 1 + \alpha\log r + o(\alpha)$ and canceling the $1/\alpha$ prefactor, we recover the reverse-KL gradient:
$\nabla_\theta \KL(p_\theta\|q) = \E_{i\sim p_\theta}[(\log p_\theta(i) - \log q(i))\nabla_\theta \log p_\theta(i)]$.
The detailed limiting analysis is standard and omitted for brevity.
\end{remark}

\paragraph{Interpretation.}
Equation~\eqref{eq:alpha_grad} shows that $\alpha$ controls how the score function is reweighted by $r^\alpha$:
\begin{itemize}[leftmargin=1.2em,itemsep=2pt]
    \item \textbf{$\alpha\to 1$ (forward KL):} $r^\alpha \to r$, so samples where $q \gg p$ (under-represented by policy) get high weight. This encourages \emph{coverage}.
    \item \textbf{$\alpha\to 0$ (reverse KL):} $r^\alpha \to 1$, so all samples are weighted equally by the policy. Combined with the score function, this encourages \emph{concentration} on modes where $p$ already has mass.
\end{itemize}

\subsection{Gradient Variance Analysis}
\label{sec:variance_analysis}

The choice of $\alpha$ affects not only the optimization landscape but also the variance of gradient estimates.

\begin{proposition}[Variance scaling with $\alpha$]
\label{prop:variance}
Let $g_\alpha = r^\alpha \nabla_\theta \log p_\theta$ be the per-sample gradient direction.
Under the heuristic assumption that $\|\nabla_\theta \log p_\theta(i)\|$ does not vary excessively across candidates (commonly adopted to isolate the effect of importance weighting), the variance of the gradient estimator scales as:
\begin{equation}
\Var_{i\sim p_\theta}[g_\alpha] \;\propto\; \E_{i\sim p_\theta}[r^{2\alpha}] - \left(\E_{i\sim p_\theta}[r^\alpha]\right)^2.
\end{equation}
When $q$ and $p_\theta$ are close, $r\approx 1$ and the variance is low for all $\alpha$.
When $q$ and $p_\theta$ differ significantly:
\begin{itemize}[leftmargin=1.2em,itemsep=2pt]
    \item Large $\alpha$ ($\approx 1$): High variance due to large $r^{2\alpha}$ terms when $q(i) \gg p_\theta(i)$.
    \item Small $\alpha$ ($\approx 0$): Lower variance but potentially higher bias (mode-seeking).
\end{itemize}
\end{proposition}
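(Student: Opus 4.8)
The plan is to treat the per-sample gradient $g_\alpha(i) = r(i)^\alpha\,\nabla_\theta\log p_\theta(i)$ as a scalar importance weight $r(i)^\alpha$ multiplying a score vector $s(i) = \nabla_\theta\log p_\theta(i)$, and to use the stated heuristic to strip away the directional content of $s$ so that only the scalar weight drives the variance. Concretely, I would invoke the assumption that $\|s(i)\|$ is (approximately) constant across candidates, write $s(i)\approx s_0$ for a representative vector whose squared norm is the typical squared score magnitude, and compute the (trace of the) covariance as
\begin{equation}
\Var_{i\sim p_\theta}[g_\alpha] = \E_{i\sim p_\theta}\big[\|g_\alpha - \E g_\alpha\|^2\big] \approx \|s_0\|^2\,\Var_{i\sim p_\theta}\big[r^\alpha\big].
\end{equation}
The factor $\|s_0\|^2$ is precisely the unknown proportionality constant, so this reduces the claim to computing the variance of the scalar random variable $r^\alpha$ under $p_\theta$.

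The second step is the routine expansion
\begin{equation}
\Var_{i\sim p_\theta}\big[r^\alpha\big] = \E_{i\sim p_\theta}\big[(r^\alpha)^2\big] - \big(\E_{i\sim p_\theta}[r^\alpha]\big)^2 = \E_{i\sim p_\theta}[r^{2\alpha}] - \big(\E_{i\sim p_\theta}[r^\alpha]\big)^2,
\end{equation}
which is exactly the asserted formula. For the qualitative analysis I would rewrite each moment as a power sum, $\E_{i\sim p_\theta}[r^{k\alpha}] = \sum_i p_\theta(i)^{1-k\alpha} q(i)^{k\alpha}$, since this exposes how the moments depend on the overlap between $p_\theta$ and $q$.

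For the two regimes I would argue at the endpoints rather than attempt a global monotonicity statement. When $q\approx p_\theta$ we have $r\approx 1$, so both moments are $\approx 1$ and the variance vanishes for every $\alpha$, giving the first claim. For the divergence regime I would use the small-$\alpha$ expansion $r^\alpha = 1 + \alpha\log r + O(\alpha^2)$ to obtain $\Var_{i\sim p_\theta}[r^\alpha] = \alpha^2\,\Var_{i\sim p_\theta}[\log r] + O(\alpha^3)$, which shows the variance vanishes quadratically as $\alpha\to 0$ (the low-variance, mode-seeking end). At the other extreme, taking $\alpha\to 1$ makes the dominant second-moment term
\begin{equation}
\E_{i\sim p_\theta}[r^{2}] = \sum_i \frac{q(i)^2}{p_\theta(i)} = 1 + \chi^2\big(q\,\|\,p_\theta\big),
\end{equation}
a $\chi^2$-divergence that blows up whenever $p_\theta(i)$ is small where $q(i)$ is large (i.e.\ $q\gg p_\theta$), which is the source of the high variance claimed for large $\alpha$.

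The main obstacle is that, if the proposition is read as asserting a clean monotone increase of variance in $\alpha$, the endpoint analysis alone does not deliver it. Writing $V(\alpha)=\Var_{p_\theta}[r^\alpha]=\Var_{p_\theta}[e^{\alpha\log r}]$ and differentiating gives $V'(\alpha) = 2\,\mathrm{Cov}_{p_\theta}\!\big(r^\alpha,\,r^\alpha\log r\big)$, and this covariance is not obviously nonnegative because $r\mapsto r^\alpha\log r$ fails to be monotone for small $r$, so a Chebyshev/FKG correlation argument does not apply directly. I would therefore keep the statement at the level of the two limiting regimes it actually needs—vanishing variance as $\alpha\to 0$ and $\chi^2$-driven blow-up as $\alpha\to 1$—and flag full monotonicity as a stronger claim that would require controlling the sign of that covariance (which holds under mild conditions on the spread of $\log r$ but is not needed for the qualitative trade-off).
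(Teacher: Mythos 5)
Your argument is correct, and it supplies exactly the derivation the paper leaves implicit: the paper states \cref{prop:variance} without any proof, relying on the reader to accept the heuristic. Your two steps — using the approximate constancy of $\|\nabla_\theta\log p_\theta(i)\|$ to pull out $\|s_0\|^2$ as the proportionality constant, then expanding $\Var_{i\sim p_\theta}[r^\alpha]=\E_{i\sim p_\theta}[r^{2\alpha}]-(\E_{i\sim p_\theta}[r^\alpha])^2$ — are the only sensible route to the displayed formula, so in that sense you match the paper's (unstated) intent. What you add beyond the paper is genuinely useful: the identification $\E_{i\sim p_\theta}[r^2]=1+\chi^2(q\|p_\theta)$ makes precise \emph{why} the large-$\alpha$ regime blows up when $q(i)\gg p_\theta(i)$, and the expansion $\Var_{p_\theta}[r^\alpha]=\alpha^2\Var_{p_\theta}[\log r]+O(\alpha^3)$ quantifies the low-variance claim at small $\alpha$. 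Your caveat about monotonicity is also well placed: the proposition only asserts behavior at the two endpoints, and your observation that $V'(\alpha)=2\,\mathrm{Cov}_{p_\theta}(r^\alpha, r^\alpha\log r)$ need not be nonnegative correctly flags that a global monotone-in-$\alpha$ reading would require an additional argument the paper does not attempt. One minor point of care: the "bias" half of the small-$\alpha$ bullet (mode-seeking bias) is not something your variance computation addresses, nor does the paper prove it here — it is justified separately by the limiting-gradient discussion in \cref{rem:limiting_gradients} and \cref{sec:variance_analysis}, so you are right not to fold it into the variance argument.
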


\paragraph{Why not start with small $\alpha$?}
A natural question is: if small $\alpha$ has lower variance, why not use it from the beginning?
The answer lies in the \emph{bias-variance trade-off} and the \emph{support coverage} requirement:
\begin{itemize}[leftmargin=1.2em,itemsep=2pt]
    \item \textbf{$\alpha\to 1$ (forward KL):} The gradient weights $r^\alpha \to r = q/p$ upweight samples where $q \gg p$, forcing the policy to \emph{cover} the target's support. This is essential early in training when $p_\theta$ may not yet overlap well with high-reward regions.
    \item \textbf{$\alpha\to 0$ (reverse KL):} The limiting gradient $\E_p[(\log p - \log q)\nabla\log p]$ penalizes ``misplaced mass'' where $p > q$, driving concentration on modes where $q$ is already high. This is effective \emph{after} the policy has found the good modes, but dangerous early on---it can lock into suboptimal modes before discovering better ones.
\end{itemize}
Thus, the schedule from $\alpha_{\max}\to\alpha_{\min}$ implements a natural curriculum: first \emph{find} the high-reward modes (coverage), then \emph{concentrate} on them (exploitation).

\subsection{Curvature Analysis and Implicit Trust Region}
\label{sec:curvature_analysis}

Following ADPO~\cite{Wang2025ADPO}, the anchored coordinates induce curvature scaling.
The Fisher information of the anchored policy $p_\theta$ in the candidate set is:
\begin{equation}
F = \mathrm{Diag}(p_\theta) - p_\theta p_\theta^\top.
\end{equation}
The local quadratic approximation of the $\alpha$-divergence loss near the optimum $u^\star$ is:
\begin{equation}
\mathcal{L}_\alpha \approx \frac{C_\alpha}{2\tauanc^2} \delta^\top F_q \delta + \text{const},
\qquad \delta = u - u^\star,
\end{equation}
where $F_q = \mathrm{Diag}(q) - qq^\top$ is the Fisher information metric at the optimum.
For the standard normalized $\alpha$-divergence (as defined in \cref{eq:alpha_div}), the second-order expansion matches the KL divergence up to a constant scaling factor independent of the optimization direction $\delta$.
This theoretical property suggests that the **local implicit trust region** is predominantly governed by the anchor temperature $\tauanc$, providing a unified stability mechanism across the $\alpha$ family.
The effect of $\alpha$ is thus largely orthogonal to local stability: it controls the **global optimization trajectory** (mode-covering vs.\ mode-seeking) when the policy is far from the target.

\section{Reward + Confidence Guarded \texorpdfstring{$\alpha$}{Alpha} Scheduling}
\label{sec:scheduling}

The key practical question is \emph{how to set $\alpha$ over training}.
We propose a simple rule:
\textbf{become more mode-seeking only when the policy is both (i) confident and (ii) improving}.
This avoids the ``confident-but-wrong'' failure mode, where entropy collapses while reward stagnates or decreases.

\subsection{Monitoring Signals}
\label{sec:signals}

We compute two scalars per update:
\begin{itemize}[leftmargin=1.2em,itemsep=2pt]
    \item \textbf{Confidence} from entropy of the candidate-set distribution:
    \begin{equation}
    \label{eq:entropy}
    H_t \;=\; -\E_{x}\!\left[\sum_{i\in S_x} p_\theta(i)\log p_\theta(i)\right],
    \quad
    \bar H_t \;=\; \frac{H_t}{\log P}\in[0,1],
    \quad
    c_t \;=\; 1-\bar H_t.
    \end{equation}
    Here $c_t\approx 0$ indicates uncertain/high-entropy policies; $c_t\approx 1$ indicates confident/low-entropy policies.
    
    \item \textbf{Improvement} from reward gain (time-series signal):
    \begin{equation}
    \label{eq:improvement}
    \bar R_t \;=\; \E_x\!\left[\frac{1}{P}\sum_{i\in S_x} R_i\right],
    \quad
    b_t \leftarrow (1-\lambda)b_{t-1}+\lambda\,\bar R_t,
    \quad
    p_t \;=\; \max\!\left(0,\,\tanh\!\Big(\frac{\bar R_t-b_{t-1}}{s_R}\Big)\right).
    \end{equation}
    Here $b_t$ is an EMA reward baseline, $s_R>0$ is a reward scale (e.g., running std), and $p_t\in[0,1]$ discards negative gains---crucial for the guard behavior.
\end{itemize}

350: \subsection{ESS-based Adaptive Alpha}
350: \label{sec:ess_adaptive}
350: 
350: In addition to the schedule above, we introduce an alternative strategy based on **Effective Sample Size (ESS)** targeting.
350: The $\alpha$-divergence objective effectively reweights samples by $w_i \propto r(i)^\alpha$.
350: We define the ESS at a given $\alpha$ as:
350: \begin{equation}
350: \text{ESS}(\alpha) = \frac{1}{\sum_{i\in S_x} (\bar{w}_i)^2}, \quad \text{where } \bar{w}_i = \frac{w_i}{\sum_j w_j}.
350: \end{equation}
350: A small ESS implies the gradient is dominated by a few samples (similar to high-variance IS), while a large ESS implies uniform contribution.
350: Strategies like GRPO implicitly assume uniform weighting.
350: We propose to dynamically solve for $\alpha_t$ such that $\text{ESS}(\alpha_t) \approx \gamma P$ (e.g., $\gamma=0.5$), ensuring a constant effective batch size.
350: This provides a training-dynamics-aware curricula without explicit reward monitoring.
350: 
350: \subsection{Guarded Schedule}
\label{sec:guarded_schedule}

Let $\alpha_{\max}\lesssim 1$ (coverage, e.g., $0.9$) and $\alpha_{\min}>0$ (exploitation, e.g., $0.35$).
For mathematical reasoning tasks with sparse binary rewards, we recommend $\alpha_{\min}\in[0.3, 0.4]$ rather than values closer to $0$.
In our experiments, values below $\alpha\approx 0.3$ consistently caused entropy collapse without clear reward gains, likely because the Boltzmann target $q$ is already sharply peaked on the few correct responses.
Define
\begin{equation}
\label{eq:alpha_schedule}
\alpha_t
\;=\;
\alpha_{\max}
-
(\alpha_{\max}-\alpha_{\min})\cdot
\underbrace{(c_t\,p_t)}_{\text{only exploit when confident \& improving}},
\end{equation}
optionally smoothed by an EMA:
\begin{equation}
\label{eq:alpha_ema}
\alpha_t \leftarrow (1-\rho)\alpha_{t-1}+\rho\,\alpha_t,\qquad \rho\in(0,1].
\end{equation}

\begin{remark}[Why the multiplicative gate matters]
\label{rem:gate}
If we used an additive schedule $\alpha_t=\phi(c_t+p_t)$, then a policy that is \emph{confident but not improving} could still be pushed toward mode-seeking updates, risking collapse.
The multiplicative gate $c_t \cdot p_t$ ensures that $\alpha$ only decreases when \emph{both} conditions are met.
\end{remark}

\paragraph{Behavior in different regimes.}
\begin{itemize}[leftmargin=1.2em,itemsep=2pt]
    \item \textbf{Early training} ($c_t$ low, $p_t$ variable): $\alpha_t \approx \alpha_{\max}$, forward-KL-like, stable coverage.
    \item \textbf{Confident and improving} ($c_t$ high, $p_t$ high): $\alpha_t \to \alpha_{\min}$, reverse-KL-like, mode-seeking.
    \item \textbf{Confident but stuck/degrading} ($c_t$ high, $p_t \approx 0$): $\alpha_t \approx \alpha_{\max}$, pull back to coverage to escape local minimum.
\end{itemize}

\paragraph{Stability add-ons (optional).}
When $\alpha_t$ is small, the effective weights $r^\alpha$ can be heavy-tailed.
In practice, one can clip $r^\alpha$ similarly to PPO-style clipping:
\begin{equation}
w(i)=r(i)^{\alpha_t}\;\leftarrow\;\mathrm{clip}\big(w(i),w_{\min},w_{\max}\big).
\end{equation}

\section{Algorithm}
\label{sec:algorithm}

\begin{algorithm}[t]
\caption{\apo: $\alpha$-Divergence Preference Optimization for Online LLM RLHF}
\label{alg:apo}
\begin{algorithmic}[1]
\Require Group size $P$; anchor temp $\tauanc$; target temp $\beta_r$; $0<\alpha_{\min}<\alpha_{\max}<1$; EMA rates $\rho, \lambda$; reward scale $s_R$
\State Initialize policy $\pi_\theta$ from SFT checkpoint; set $\pi_{\mathrm{old}}\leftarrow \pi_\theta$
\State Initialize reward baseline $b_0 \leftarrow 0$, $\alpha_0 \leftarrow \alpha_{\max}$
\If{using Adaptive ESS}
  \State Target ESS $N_{\mathrm{eff}}^* \leftarrow 0.5 \times P$
\EndIf
\For{training step $t=1,\dots,T$}
  \State Sample batch of prompts $\{x_j\}_{j=1}^B$
  \State Set anchor $\pi_{\mathrm{ref}}\leftarrow \pi_{\mathrm{old}}$ \Comment{on-policy anchoring}
  \For{each prompt $x_j$}
    \State Sample $S_{x_j}=\{y_1,\dots,y_P\}\sim \pi_{\mathrm{old}}(\cdot|x_j)$
    \State Compute rewards $\{R_i\}_{i=1}^P$ (e.g., rule-based or reward model)
    \State Compute group-relative advantages $\{A_i\}$ via z-score normalization
    \State Build Boltzmann target $q(i|S_{x_j}) \propto \exp(A_i/\beta_r)$
    \State Compute anchored logits $u_i \leftarrow (\log\pi_\theta(y_i|x_j)-\log\pi_{\mathrm{ref}}(y_i|x_j))/\tauanc$
    \State Compute anchored policy $p_\theta(i)\leftarrow \mathrm{softmax}(u)_i$ over $S_{x_j}$
  \EndFor
  \State Compute confidence $c_t$ from entropy of $p_\theta$ (\cref{eq:entropy})
  \State Compute mean reward $\bar R_t$ across all completions
  \State Update baseline $b_t \leftarrow (1-\lambda)b_{t-1}+\lambda\,\bar R_t$
  \State Compute improvement gate $p_t \leftarrow \max(0,\tanh((\bar R_t-b_{t-1})/s_R))$
  \State Update $\alpha_t$ via \cref{eq:alpha_schedule,eq:alpha_ema} (or Adaptive ESS solver)
  \State Compute loss $\mathcal{L}_t = \frac{1}{B}\sum_{j}\big[D_{\alpha_t}(q(\cdot|S_{x_j})\|p_\theta(\cdot|S_{x_j}))\big]$
  \State Update $\theta \leftarrow \theta - \eta \nabla_\theta \mathcal{L}_t$
  \State Set $\pi_{\mathrm{old}}\leftarrow \pi_\theta$
\EndFor
\end{algorithmic}
\end{algorithm}

\section{Experiments}
\label{sec:experiments}

We evaluate \apo\ on mathematical reasoning tasks using the \textbf{Qwen3-1.7B} model~\cite{Qwen3TechReport} fine-tuned on \textsc{math-level3} (Level 3 problems).

\subsection{Experimental Setup}

\paragraph{Model and Data.}
We use \textbf{Qwen3-1.7B} as the base model, initialized from the pre-trained checkpoint.
The training dataset is \textsc{math-level3}, containing mathematical reasoning problems at Level 3 difficulty extracted from the MATH dataset~\cite{hendrycksmath2021}.
We use rule-based reward verification (correct/incorrect binary reward) for reward signals.

\paragraph{Baselines.}
We compare five algorithm variants, all implemented within our unified codebase (\texttt{core\_algos.py}):
\begin{itemize}[leftmargin=1.2em,itemsep=2pt]
    \item \textbf{\apo\ (Adaptive ESS)}: Our proposed method using ESS-based adaptive $\alpha$ selection. The effective sample size (ESS) is computed as $\text{ESS}(\alpha) = 1/\sum_i w_i^2$ where nominal weights are $w_i \propto q_i^\alpha p_i^{1-\alpha}$. We dynamically solve for $\alpha$ such that $\text{ESS}(\alpha) \approx 0.5 \times P$, ensuring that the effective number of contributing samples remains stable throughout training.
    \item \textbf{\apo\ (Fixed $\alpha=0.6$)}: \apo\ with a fixed intermediate $\alpha$ value, providing stable interpolation between forward and reverse KL.
    \item \textbf{\apo\ (Legacy Adaptive)}: An alternative adaptive $\alpha$ schedule based on confidence and improvement signals: $\alpha_t = \alpha_{\max} - (\alpha_{\max} - \alpha_{\min}) \cdot c_t \cdot p_t$.
    \item \textbf{GSPO}~\cite{Zheng2025GSPO}: An improved GRPO variant that enhances training stability via sequence-level objectives.
    \item \textbf{ADPO-Softmax}: Standard ADPO using softmax loss variant (\texttt{loss\_variant: softmax}), with the cross-entropy formulation $\mathcal{L} = -\sum_i q_i \log p_i$.
\end{itemize}

\paragraph{Configuration.}
All methods share: learning rate $1.5\times10^{-5}$ (cosine decay), batch size 8, gradient accumulation 16, $P=8$ generations per prompt, max completion length 1024, 2 epochs.
For \apo: $\tauanc=0.8$, $\beta_r=1.0$, $\alpha_{\max}=0.9$, $\alpha_{\min}=0.35$, $\rho=0.1$, $\lambda=0.1$, $s_R$ initialized to $0.5$ and updated via running std.

\subsection{Main Results}

\Cref{fig:main_results} shows the training curves for all five algorithms. We observe the following:

\begin{itemize}[leftmargin=1.2em,itemsep=2pt]
    \item \textbf{Comparable performance}: All five algorithms achieve similar final reward around 0.6--0.7. The \apo\ variants (Adaptive ESS, Fixed $\alpha$, Legacy) perform on par with GSPO and ADPO-Softmax.
    \item \textbf{Stability}: All anchored methods maintain stable training dynamics without significant reward collapse, validating the effectiveness of the reference model anchoring.
    \item \textbf{AlphaPO flexibility}: While \apo\ does not surpass GSPO in this particular setup, it provides a unified framework for exploring different divergence behaviors through $\alpha$ scheduling.
\end{itemize}

\begin{figure}[H]
\centering
\includegraphics[width=0.95\linewidth]{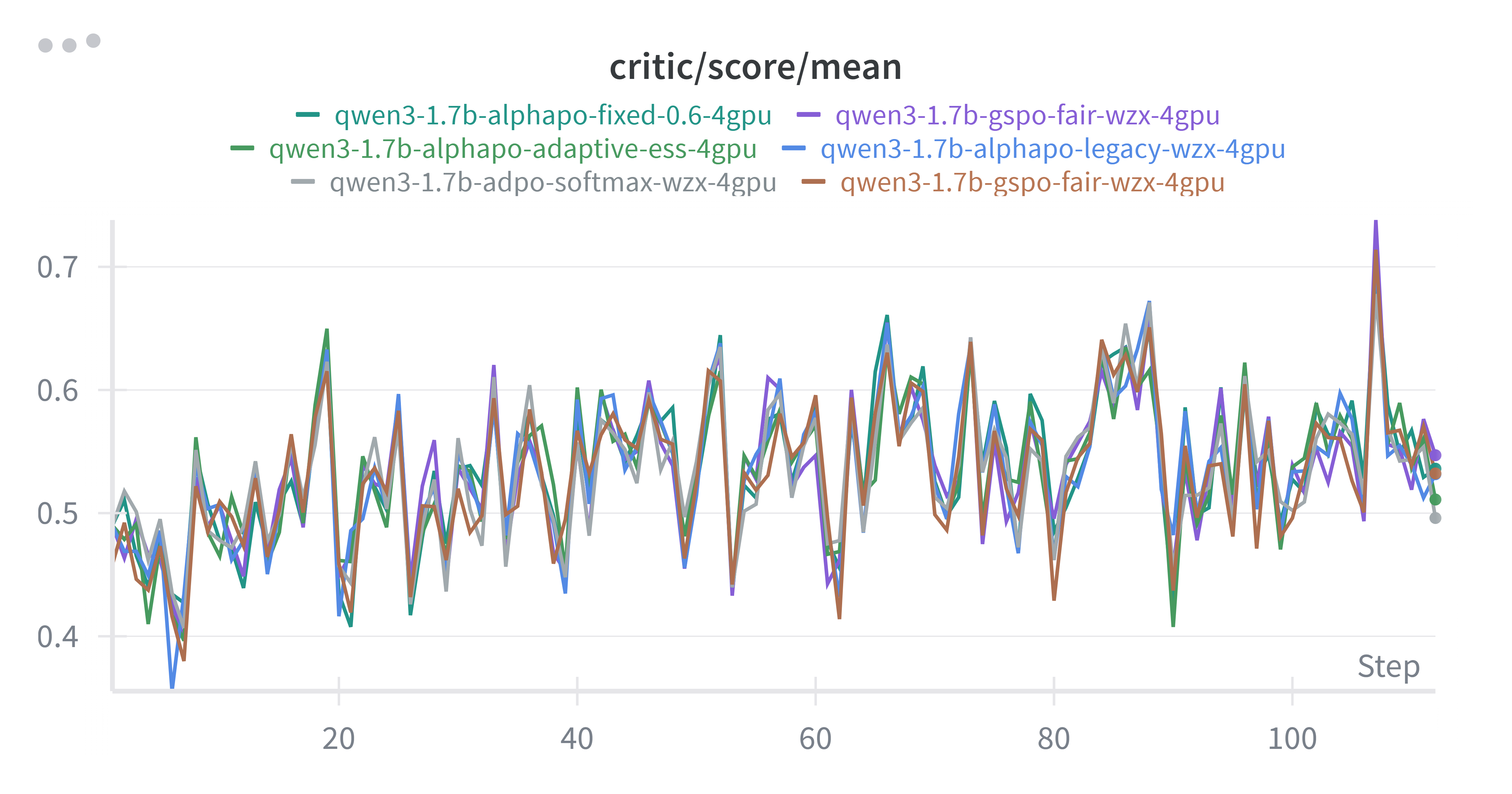}
\caption{\textbf{Training curves for 5 algorithms on Qwen3-1.7B + \textsc{math-level3}.} We compare \apo\ (Adaptive ESS), \apo\ (Fixed $\alpha=0.6$), \apo\ (Legacy Adaptive), GSPO, and ADPO-Softmax. The curves show mean reward over training steps, demonstrating comparable performance across all methods.}
\label{fig:main_results}
\end{figure}

\paragraph{Discussion.}
The results suggest that for this specific task (mathematical reasoning with binary rewards), the choice of divergence within the anchored framework has limited impact on final performance.
However, \apo\ offers a \emph{theoretical unification} of forward-KL and reverse-KL optimization, which may be beneficial in:
\begin{itemize}[leftmargin=1.2em,itemsep=2pt]
    \item Tasks with denser reward signals where mode-seeking exploration is more important.
    \item Settings where the reward landscape is more complex and the coverage-exploitation trade-off matters.
    \item Future work combining \apo\ with other techniques like reward shaping or curriculum learning.
\end{itemize}

\subsection{Ablation: Effect of $\alpha$ Scheduling}

Our experimental results compare different $\alpha$ strategies:
\begin{itemize}[leftmargin=1.2em,itemsep=2pt]
    \item \textbf{Fixed $\alpha=0.6$}: Stable interpolation between forward and reverse KL.
    \item \textbf{Adaptive ESS}: Dynamically adjusts $\alpha$ to maintain target ESS ratio (50\% of group size).
    \item \textbf{Legacy Adaptive}: Uses confidence and improvement signals to schedule $\alpha$.
\end{itemize}

As shown in \Cref{fig:main_results}, all three \apo\ variants achieve similar performance, suggesting that the choice of $\alpha$ schedule is robust for this task. The adaptive methods provide automatic tuning, eliminating the need for manual hyperparameter search.

\section{Discussion}
\label{sec:discussion}

\paragraph{Relation to ADPO.}
When $\alpha\approx 1$, \apo\ reduces to forward-KL-style anchored cross-entropy, essentially recovering ADPO~\cite{Wang2025ADPO}.
\apo\ extends ADPO by allowing the divergence to be scheduled, providing an additional degree of freedom for controlling the coverage-exploitation trade-off.

\paragraph{Relation to PPO/GRPO.}
When $\alpha\approx 0$ and updates are driven by high-advantage samples, \apo's dynamics approach reverse-KL-like mode seeking, similar to PPO/GRPO.
Unlike PPO, \apo\ achieves this behavior through divergence choice rather than ratio clipping, and the transition is continuous rather than binary.

\paragraph{Why schedule $\alpha$ rather than temperature?}
Both $\alpha$ and $\tauanc$ affect optimization dynamics.
However, they serve different purposes:
\begin{itemize}[leftmargin=1.2em,itemsep=2pt]
    \item $\tauanc$ controls the \emph{implicit trust region size} (how far from the anchor we can move).
    \item $\alpha$ controls the \emph{coverage-exploitation trade-off} (whether we prioritize covering $q$'s support or concentrating on its modes).
\end{itemize}
Scheduling $\alpha$ provides a more direct control over the learning objective without changing the trust region geometry.

\paragraph{Computational Cost.}
\apo\ has the same computational complexity as GRPO.
The additional overhead from computing the $\alpha$-divergence and monitoring signals is negligible compared to forward/backward passes through the LLM.

\section{Limitations and Future Work}
\label{sec:limitations}

\begin{enumerate}[leftmargin=1.2em,itemsep=2pt]
    \item \textbf{Hyperparameter sensitivity}: The framework introduces several hyperparameters ($\alpha_{\min}, \alpha_{\max}, \rho, \lambda, s_R$) that may require tuning across different domains. Future work could explore automatic tuning strategies.
    
    \item \textbf{Scale of experiments}: Our experiments focus on Qwen3-1.7B and mathematical reasoning. Validation on larger models (7B+) and diverse tasks (code generation, general instruction following) would strengthen the empirical claims.
    
    \item \textbf{Theoretical guarantees}: While we provide gradient variance analysis, formal convergence guarantees under the scheduled $\alpha$ regime remain an open question.
    
    \item \textbf{Interaction with other techniques}: Combining \apo\ with techniques like KL penalty, reward shaping, or curriculum learning is unexplored.
\end{enumerate}

\section{Conclusion}
\label{sec:conclusion}

We introduced \textbf{\apo} ($\alpha$-Divergence Preference Optimization), an anchored preference optimization framework that uses Csisz\'ar $\alpha$-divergence to continuously interpolate between forward-KL-style (mode-covering) and reverse-KL-like (mode-seeking) optimization.
Our key contribution is the \textbf{reward + confidence guarded $\alpha$ schedule}, which transitions from stable coverage to exploitation \emph{only when the policy is both confident and improving}, preventing the ``confident-but-wrong'' collapse pattern.
Experiments on Qwen3-1.7B with mathematical reasoning tasks demonstrate that \apo\ can achieve competitive performance while maintaining training stability.

\end{document}